\documentclass{article}

\usepackage[preprint, nonatbib]{neurips_2020}

\usepackage[utf8]{inputenc} 
\usepackage[T1]{fontenc}    
\usepackage{hyperref, xcolor}       
\usepackage{url}            
\usepackage{booktabs}       
\usepackage{amsfonts}       
\usepackage{nicefrac}       
\usepackage{microtype}      
\usepackage{nameref}
\usepackage{wrapfig}

\usepackage{amsthm}
\newtheorem{prop}{Proposition}

\newtheorem{lemma}{Lemma}

\theoremstyle{definition}
\newtheorem{defn}{Definition}
\newtheorem{cor}{Corollary}
\definecolor{urlcolor}{rgb}{0,0.2,0.6}
\hypersetup{colorlinks=true, urlcolor=urlcolor, linkcolor=urlcolor, citecolor=urlcolor}

\usepackage{graphicx, float}
\usepackage{booktabs}
\usepackage{tikz}
\usepackage{pgfplots}
\usepackage{color,soul}
\usepackage{amsmath,amsfonts,amssymb,amsthm}
\usepackage{empheq,boldline,textcomp,gensymb, lipsum}
\usepackage{caption}
\usepackage{subcaption}

\newcommand{\R}{\mathbb{R}} 
\DeclarePairedDelimiter\abs{\lvert}{\rvert} \DeclarePairedDelimiter\norm{\lVert}{\rVert}
\makeatletter
\let\oldabs\abs
\def\abs{\@ifstar{\oldabs}{\oldabs*}} 
\let\oldnorm\norm
\def\norm{\@ifstar{\oldnorm}{\oldnorm*}} 
\newcommand\myeq{\stackrel{\mathclap{\normalfont\mbox{def}}}{=}}
\makeatother

\newcommand{\breaktowidth}[2]{\vtop{\hsize#1\noindent$#2$}}

\pgfplotsset{compat=1.14}

\usepackage{multibib}
\newcites{New}{Supplementary References}

\begin{document}

\title{Advantages of biologically-inspired adaptive neural activation in RNNs during learning}

\author{%
Victor Geadah 
    \thanks{Mila - Quebec Artificial Intelligence Institute, Canada}~ 
    \thanks{Université de Montréal, Département de Mathématiques et Statistiques, Canada} 
\And
Giancarlo Kerg
    \footnotemark[1]~
    \thanks{Université de Montréal, Département d'Informatique et Recherche Opérationelle, Canada \newline}
\And 
Stefan Horoi
    \footnotemark[1]~ \footnotemark[2]
\AND 
Guy Wolf
    \footnotemark[1]~ \footnotemark[2]
\And 
Guillaume Lajoie
    \footnotemark[1]~ \footnotemark[2]~ 
    \thanks{Correspondence to: <g.lajoie@umontreal.ca>}
}

\newpage

\maketitle
\setcounter{footnote}{0}

\begin{abstract}

Dynamic adaptation in single-neuron response plays a fundamental role in neural coding in biological neural networks. Yet, most neural activation functions used in artificial networks are fixed and mostly considered as an inconsequential architecture choice.
In this paper, we investigate nonlinear activation function {\it adaptation} over the large time scale of learning, and outline its impact on sequential processing in recurrent neural networks.
We introduce a novel parametric family of nonlinear activation functions, inspired by input-frequency response curves of biological neurons,  which allows interpolation between well-known activation functions such as ReLU and sigmoid.
Using simple numerical experiments and tools from dynamical systems and information theory, we study the role of neural activation features in learning dynamics. 
We find that activation adaptation provides distinct task-specific solutions and in some cases, improves both learning speed and performance.
Importantly, we find that optimal activation features emerging from our parametric family are considerably different from typical functions used in the literature, suggesting that exploiting the gap between these usual configurations can help learning.
Finally, we outline situations where neural activation adaptation alone may help mitigate changes in input statistics in a given task, suggesting mechanisms for transfer learning optimization.
\end{abstract}

\section{Introduction}

Beyond synaptic plasticity that alters the strength of connection between neurons in the brain, there is a set of physiological mechanisms that drive individual neurons to change their input-output properties over several timescales. Collectively, these mechanisms are referred to as {\it neural adaptation}, and they are known to considerably influence the neural code~\cite{Gjorgjieva:2016bt,adaptation}.
In contrast, single neuron response in artificial neural networks (ANN) is often chosen to be a fixed property, both in time and across neurons, as the choice of activation function or ``nonlinearity'' is often made from a limited set of commonly used functions.
Nevertheless, recent practices where the leak rate of leaky rectified linear units (ReLUs) are included as optimized parameters during ANN training (e.g.~\cite{Vorontsov:2017tt}) suggest that allowing a form of ``activation adaptation" alongside learning can be beneficial. Still, what level of flexibility should be allowed, and how neural response modulation impacts learning in ANNs, are far from understood.

In this paper, we draw inspiration from biophysical mechanisms and investigate the impact of activation function adaptation, across training epochs and neurons, on learning dynamics of ANNs. Our goal with this work is not to improve on state-of-the-art performance on precise tasks, but rather to offer a mechanistic exploration of neural response flexibility in ANNs, and identify opportunities for ANN improvement.
Our contributions are threefold. We first propose in \S\ref{ss:gamma_formulation} a novel two-parameter, differentiable family of activation functions where both gain and saturation levels can be modulated. This family contains commonly used activation functions such as ReLU and sigmoid, and can continuously interpolate in between them. Second in \S\ref{ss:signal_measurements}, we analyze the impact of parametric activation changes on the dynamics of random recurrent neural networks (RNN), thereby establishing basic stability and information propagation properties linked to neural activation. Lastly in \S\ref{s:results}, through a series of simple numerical experiments with sequential processing tasks on RNNs, we investigate the impact of adaptation on learning by allowing levels of gain and saturation to evolve throughout training. 
We find that adaptation during training can lead to improvements in learning speed and performance, and gives rise to task-specific configurations reflecting dynamic computational requirements. Moreover, we showcase promising results in \S\ref{ss:results:transfer} that suggest activation adaptation alone can help a network adapt to new tasks, thereby facilitating a form of transfer learning. 
Finally, we discuss how our proposed framework may help understand some forms of neural adaptation in the brain.

\section{Background}
Historically, the choice of activation function in ANNs has ranged from the binary heaviside or ``step'' function, to the differentiable but saturating sigmoid and hyperbolic tangent, to the piece-wise linear (leaky) rectified linear unit (ReLU)~\cite{Hahnloser:2000wl}. Recently, the deep learning field largely adopted the ReLU as it makes backpropagation more efficient on modern hardware, with little empirical impact on model performance compared to other architecture choices. Still, beyond the small list of functions mentioned above, there is little work aimed at understanding activation modulation in modern ANNs.
Recent work aims to bridge this gap~\cite{Vecoven:2020ba}, bringing ideas of neuromodulation in the brain to deep learning, and our contribution builds on this direction with a focus on exploring adaptive activations during learning.

In contrast to ANNs, considerable computational work was dedicated to understand how neural adaptation and network parameter modulation influence coding in the brain (see~\cite{adaptation,Marder:2014du} for reviews). 
Of importance to our study is the neuron's input response function, known as the input frequency curve (IF-curve)~\cite{gerstner_kistler_naud_paninski_2014}. This measures the spiking frequency of a neuron as a function of applied input current, and is what inspired nonlinear activation functions used in ANNs today (see e.g.~\cite{Hahnloser:2000wl}). Recent findings highlight how critical the shape of IF-curves is in both the dynamical behavior of neural networks \cite{SingleNeuronNonlins, Crisanti_2018}) and how populations of neurons in the brain encode and process information \cite{hennequin, PNASnetworks}.
Crucially, neuron IF properties are known to adapt to changes in stimuli statistics, a property that has been shown to have important implications on the neural code~\cite{Fairhall:2001fx,Lundstrom:2008km}.
Other types of IF adaptation occur over developmental timescales, with likely implications on learning and early synaptic and network configurations~\cite{Mease:2013hd}. It is this latter flavor of neural adaptation that inspired the present work, as we set out to understand the role of activation modulation on the order of training epochs in ANNs.

\section{Model and analysis details}\label{s:adaptation_theory}

\subsection{Parametric family of activation functions}\label{ss:gamma_formulation}

We propose a novel, differentiable family of activation functions defined by
\begin{align}
    \gamma (x ;n,s) &= (1-s)\frac{\log (1+ e^{n x })}{n} + s\frac{e^{nx }}{1+e^{nx}} \label{eq:phi}
\end{align}
for $x \in \R$ with two parameters controlling its shape: the \textit{degree of saturation} $s$ and \textit{neuronal gain} $n$\footnote{We often shorten these to \textit{saturation} and \textit{gain} and collectively refer to them as the \textit{shape parameters}}.
This is a $s$-modulated convex sum of two $C^{\infty}(\R)$ functions: the non-saturating softplus ($s=0$), and the saturating sigmoid ($s=1$), while $n$ rescales the domain and controls response sharpness, or gain~\cite{ChaosInNNs}.
Figure \ref{fig:Panel_1:shape_MLE_norm}A-C shows the graph of $\gamma$ for different values of $(n,s)$, interpolating between well-known activation functions in deep learning. 
\begin{figure}[t]
    \centering
    \includegraphics[width = \textwidth]{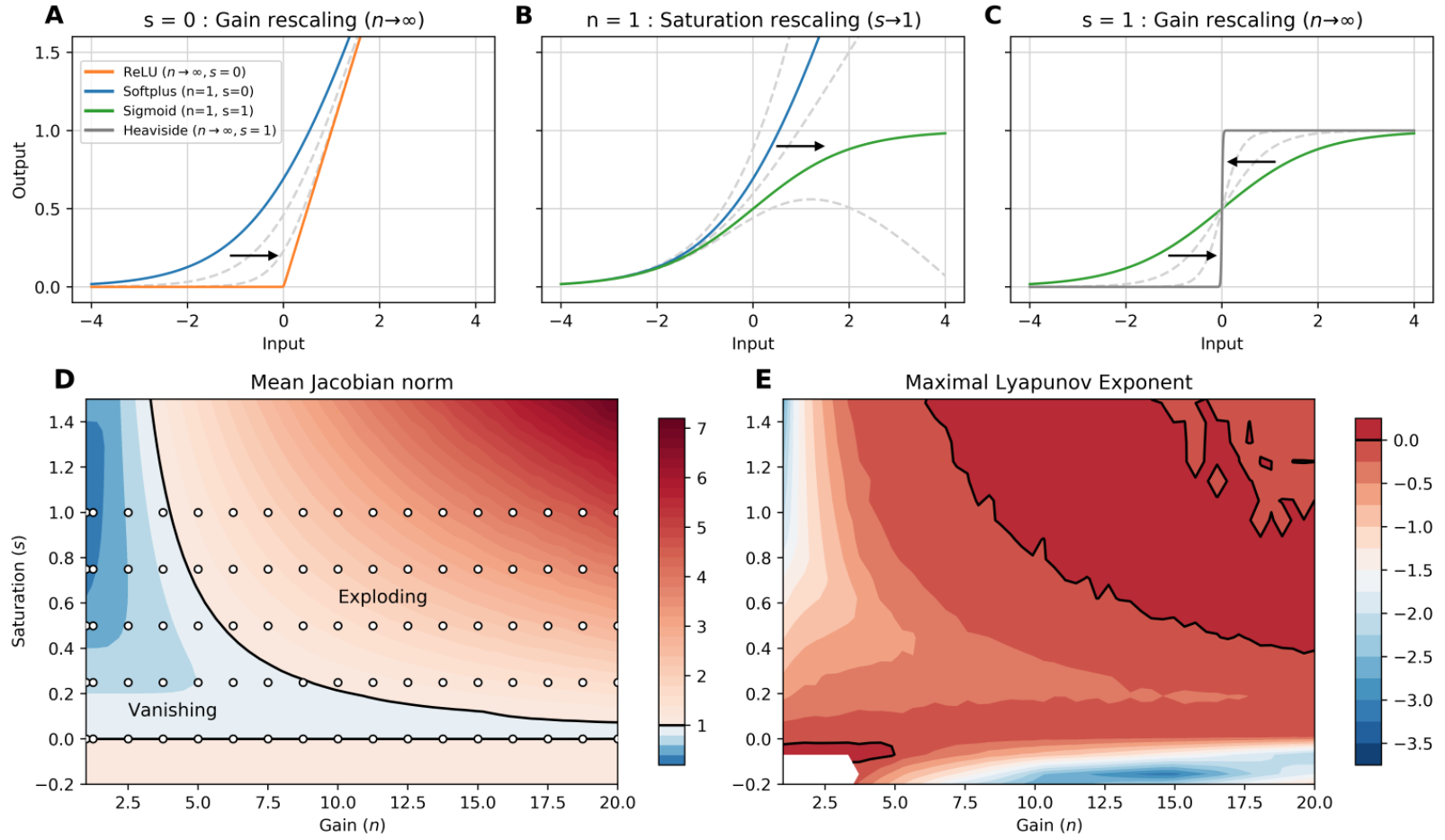}
    \caption{\textit{Activation function $\gamma$ and its properties}. \textbf{A}-~\textbf{C.} various $\gamma$ shape rescaling. Legend inlet in \textbf{A} applies to the first row and indicates well-known activation functions\protect\footnotemark~ attained by $\gamma$. \textbf{D},~\textbf{E.}. Contour plots of respectively the Jacobian norm and maximal Lyapunov exponents in parameter space, computed with random weights. Points in \textbf{D} represent the initialization grid used in this work.}
    \label{fig:Panel_1:shape_MLE_norm}
\end{figure}
We note $\gamma$ is differentiable in both $s$ and $n$, and include these parameters in the optimization scheme in several experiments described below. We refer the reader to Appendix \S\ref{appendix:task_setup} for error gradient derivations that include these parameters.

\subsection{Recurrent neural network model} In line with the goal of isolating the role of activation functions, we elect to use a simple ``vanilla'' RNN model for experiments. The vector equation for the recurrent unit activation $\boldsymbol{h}_t \in \R^{N_{rec}}$ in response to input $\boldsymbol{x}_t \in \R^{N_{in}}$, $t \in \{0,\dots,T\}$ is given by
\begin{equation}
    \boldsymbol{h}_t = \gamma  (W_{rec} \boldsymbol{h}_{t-1} + W_{in}\boldsymbol{x}_t + \boldsymbol{b}; \boldsymbol{n}, \boldsymbol{s}) \label{eq:hiddenstates}
\end{equation}
where the output $\boldsymbol{y}_t \in \R^{N_{out}}$ is generated by a linear readout $\boldsymbol{y}_t = W_{out}\boldsymbol{h}_t + \boldsymbol{b}_{out}$. Weight matrices $W_{(\cdot)}$ and biases $b_{(\cdot)}$ are optimized in all experiments. In results presented below, we explore different ways in which learning is influenced by the shape of $\gamma$, which operates point-wise on its inputs. We consider three scenarios:
\begin{enumerate}
    \item \ul{Static activations} : We explore a range of $(n,s)$ values, but they remained fixed throughout training and are shared by all neurons.
    \item \ul{Homogeneous adaptation} : $(n,s)$ are shared by all neurons, and this parameter tuple is included in the optimization process.
    \item \ul{Heterogeneous adaptation} : $(n,s)$ hold different values across neurons ($\boldsymbol{n},\boldsymbol{s} \in \R^{N}$). We write $\gamma(\boldsymbol{x};\boldsymbol{n},\boldsymbol{s})_i = \gamma(x_i;n_i,s_i) \in \R$ and include all $(n_i, s_i)$ in the optimization process.
\end{enumerate}
The vector of all trainable parameters is denoted $\Theta$, and are updated via gradient descent using backpropagation through time (BPTT), with the matrix $W_{rec}$ initialized using a random orthogonal scheme \cite{henaff} (details in Appendix $\S$\ref{appendix:task_setup}).

\protect\footnotetext{Note that while plotted for finite $n$, ReLU and Heaviside correspond to limits of $\gamma$ as $n \rightarrow \infty$.}

\subsection{Signal propagation measurements}\label{ss:signal_measurements}

Before exploring the impact of our activation function's shape on learning, we first explore how $(n,s)$ shape network dynamics, where connectivity weights are chosen randomly like at initialization, and when the network does not receive inputs. 
Dynamics in this regime are important as they dictate gradient stability properties early in training. Hence to assess how the activation gain and saturation influence on signal propagation, we use three quantities: 

\paragraph{Jacobian norm} The main mechanism leading to the well studied problem of exploding \& vanishing gradients in backpropagation and BPTT happens when products of Jacobian matrices explode or vanish~\cite{trainingRNNs,Bengio:1994do}. We average the $L^2$-norm of the derivative of Eq.~\eqref{eq:hiddenstates} with respect to $\boldsymbol{h_{t-1}}\sim \mathcal{U}(-5,5)$. A mean \ul{Jacobian norm (JN)} that is greater/less than one leads to exploding/vanishing gradients, respectively. An issue with this approximation is that the true mean depends on $\boldsymbol{h_t}$'s invariant distribution, which changes with $(n,s)$. 

\paragraph{Lyapunov Exponents} Developed in Dynamical Systems theory, Lyapunov exponents measure the exponential rate of expansion/contraction of state space along iterates (see Appendix \S\ref{LE_intro} for a LE primer). As an asymptotic quantity, the \ul{Maximal Lyapunov exponent (MLE)} has been used to quantify ANN stability and expressivity~\cite{Pennington:2018tg,BenPoole:2016vya}. We numerically compute it for our system. The MLE gives a better measurement of stability than the Jacobian norm above, although it requires more effort to approximate. A positive MLE indicates chaotic dynamics and can lead to exploding gradients, while a negative MLE leads to vanishing ones.

\paragraph{Mutual Information} We compute the \ul{mutual information (MI)} between inputs and network hidden states, both treated as random variable $X_t$ and $H_t$  (see Appendix \S\ref{MI_intro} for a primer). Borrowing from the information bottleneck principle~\cite{Tishby:1999;IB,Tishby:2017}, lower MI between these variables indicates more computations produced by the network.

Figure~\ref{fig:Panel_1:shape_MLE_norm}D,~E shows the task-independent stability metrics of JN and MLE for a range of ($n$,$s$) values (fixed across neurons). Clearly, activation shape influences Jacobian norms and will play an important role during training. Consistent with the average gradient norm, the MLE reports distinct ($n$, $s$)-regions of stability for random networks. In some cases, expansion and contractions can be useful for computations, and we further use these measurements to study training dynamics.

\section{Results}\label{s:results}

\begin{table}[t]
    \centering
    \begin{tabular}{c  c c c c}
        \toprule
         & \textbf{Copy} & \textbf{psMNIST} &  \multicolumn{2}{c}{\textbf{Character-level PTB}}\\
          & Accuracy (\%) & Accuracy (\%)  & BPC  & Accuracy (\%)  \\
         \midrule
        Specific activations & & & & \\
         \midrule
         Sigmoid & 15 & 62.2 & 1.602 & 65.8 \\
         Softplus  & 47 & 60.4 & 1.540 & 67.1 \\
         ReLU & 100 & 90.1 & 2.865 & 40.4 \\
         \midrule
         Adaptation scenario & & & & \\
         \midrule
         Static  & 100 & 95.0 & 1.534 &  67.4 \\
         Homogeneous & 100 & \textbf{95.2} & 1.529 & 67.4 \\
         Heterogeneous &  100 & 94.8 &\textbf{1.518} &  \textbf{67.6}\\
         \bottomrule
    \end{tabular}
    \vspace{0.2cm}
    \caption{(\textbf{Top}) Test performance for specific activations attained\protect\footnotemark~by $\gamma$, trained without adaptation. (\textbf{Bottom}) Optimal performance on test sets for the different adaptation scenarios.}
    \label{tab:performance}
\end{table}
\footnotetext{See legend in Figure \ref{fig:Panel_1:shape_MLE_norm}A for the corresponding $(n,s)$ combinations.}

We seek to understand how allowing $(n,s)$ to adapt during training (i.e.~be optimized) influences sequential processing in RNNs. Specifically, we are interested in the interplay between long-term and short-term processing, and how distinct activation configurations may facilitate one over the other. To this end, we conduct experiments on three synthetic tasks that demand distinct types of computations (see~\cite{nnRNN} for a discussion):
\begin{enumerate}
    \item \ul{Copy} : Introduced in \cite{LSTM}, the task requires propagation of information over long time scales. The model reads a sequence of inputs, waits for a time delay $T$ (we use $T = 200$) before outputting the same sequence.
    
    \item \ul{psMNIST} : This task requires an accumulation of information over long timescales. A fixed random permutation is applied to pixels of the popular hand-written digits MNIST dataset, and the model reads them sequentially. Correct digit class needs to be outputed at the end.
    
    \item  \ul{PTB} : This task requires ongoing information processing and inference at each time step. Character-level prediction with the Penn Tree Bank (PTB) corpus consists of predicting the next character in a sequence of text~\cite{PTB}. Performance on this task is compared in terms of test mean bits per character (BPC), where lower BPC is better, and accuracy.
\end{enumerate}
We use PyTorch \cite{PyTorch} and the Adam optimizer \cite{Adam} and refer to the Appendix $\S$\ref{appendix:task_setup} for task setups, training details and sensitivity analysis. 

\subsection{Adaptation scenarios and their task-dependent impact on performance}

In this section, we investigate the impact of our parametric family of activation functions on performance within the distinct adaptation scenarios. 
We refer to Table \ref{tab:performance} and Figure \ref{fig:main_performance_fig} for a comparison of the final performances for the various scenarios, and to Appendix \S\ref{appendix:p_results} for further results.

\begin{figure}[t]
    \centering
    \includegraphics[width=\textwidth]{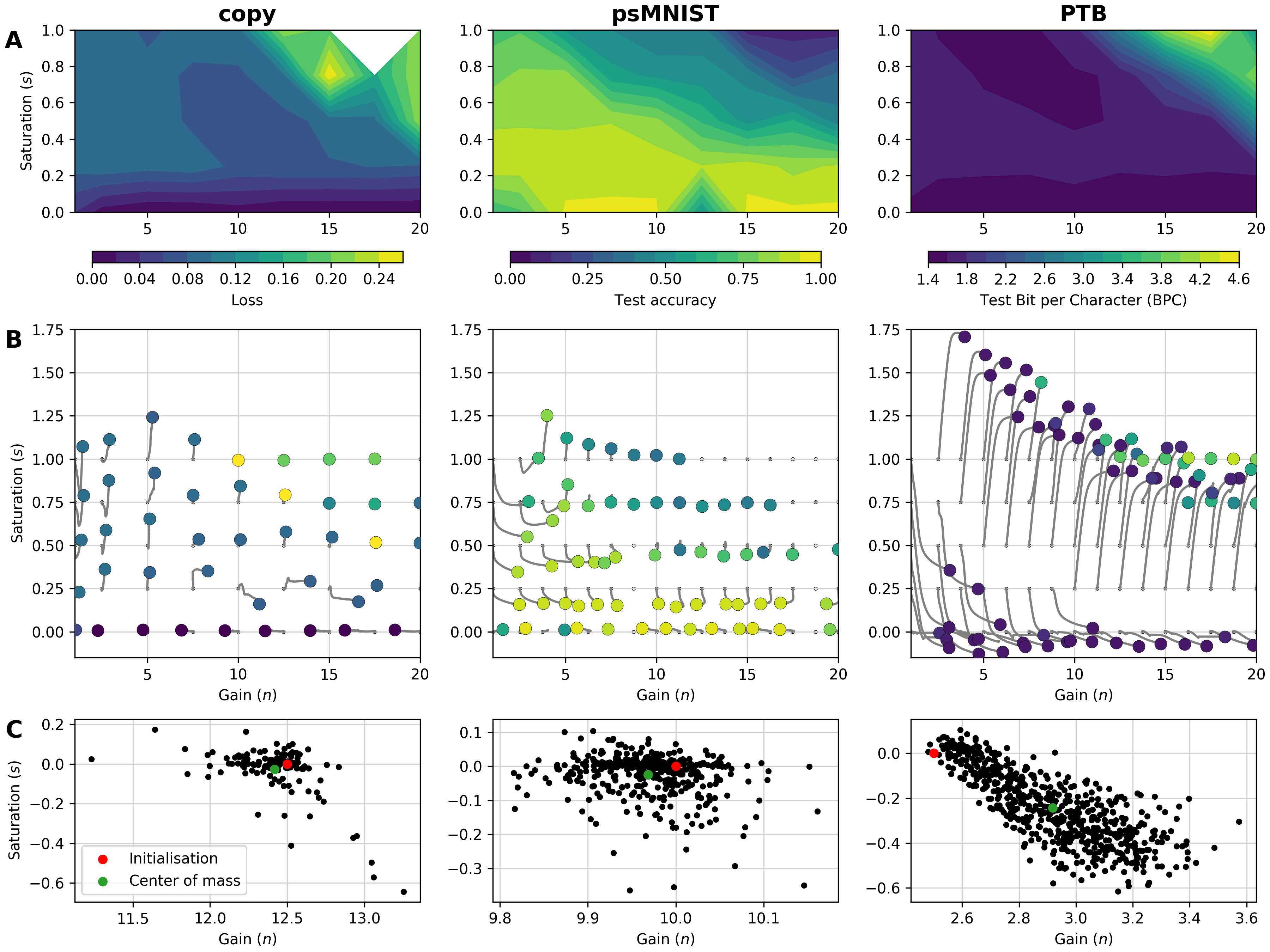}
    \caption{\textbf{A.} Final static performance, where for each point $(n,s)$, we report the performance attained by the model with the corresponding activation function. \textbf{B.} Adaptation dynamics in parameter space. Points are colored by final performance, following the colorbar above. Lines indicate shape parameter trajectories over training. \textbf{C.} Learned heterogeneous $(n,s)$ parameters for the best performing model.}
    \label{fig:main_performance_fig}
\end{figure}

\subsubsection{Static scenario: advantages of static but nontrivial activation functions} A recurrent neural network's ability to accomplish a given task dependents entirely on its ability to efficiently propagate information back through the multiple internal transformations, in the process of backpropagation. As highlighted in \cite{trainingRNNs}, recurrent neural networks are subject to the vanishing and exploding gradient problem, which we explain and highlight the associated expected impact on performance in \S\ref{ss:signal_measurements}.

As expected, we find a strong correlation between the norm of the Jacobian in parameter space which is task-independent (Figure\ref{fig:Panel_1:shape_MLE_norm}D) and the performance landscapes for each task (see Figure \ref{fig:main_performance_fig}A).
Interestingly, regions in space $ (n, s) $ with poor performance are all associated with an exploding gradient, not a vanishing gradient.
Networks whose activation functions have shape parameters in a neighborhood of  $\{(n,s) : \norm{\gamma'(x;n,s)} = 1\}$ present optimal performance, on all the tasks.
On the one hand, this further emphasizes the performance of ReLU (see \cite{ReLU_Glorot_Bengio}) as part of this $(n,s)$-neigborhood. %
However, as we show in Table \ref{tab:performance}, traditional nonlinearities (including ReLU) are outperformed by the considerably different activation functions arising in the different adaptation scenarios. 
This result highlights that non trivial combinations of parameters may also achieve optimal performance while allowing for much more complex nonlinear transformations than ReLU.

\subsubsection{Homogeneous adaptation improves learning speed and overall performance} 

Firstly, regardless of the adaptation scenario considered, we observe that only models with $(n,s)$ parameters located on the $s=0$ axis reach an accuracy of 1.0 for the Copy task (see Figure \ref{fig:main_performance_fig}B-right). This was to be expected since there is an explicit and optimal solution for this task simply involving rotation matrices and linear activation functions~\cite{henaff}. The activation functions closest to linear\footnote{i.e that minimize the operator distance between them and an arbitrary linear function} are precisely combinations of parameters with $s=0$. However, focusing on these parameters, we observe that the gain levels in the presence of adaptation \emph{decrease} through learning, in opposition to adaptation dynamics that would converge towards and encourage the use of ReLU.

For psMNIST and PTB, the introduction of homogeneous adaptation is useful both in terms of learning speed and overall test performance.
Throughout training for both tasks, homogeneous adaptation outperforms the static scenario. There is an increase in final performance (see Table \ref{tab:performance}), and more importantly on learning speed : modal values for homogeneous adaptation reach within 5\% of optimal performance 85\% faster than static for psMNIST, and 29\% faster than static for PTB (see Appendix \S\ref{appendix:p_results}, Figure \ref{fig:performance_epochs} for the performance over training). Moreover, learned combinations of $(n,s)$ parameters differ from the conventional activations (see Figure \ref{fig:main_performance_fig}B) but do so differently for each task. Shape parameters converge \textit{between} known functions for psMNIST, and conversely converge towards previously unexplored $(n,s)$-regions for PTB.

\subsubsection{Heterogeneous adaptation} The heterogeneous adaptation scenario provides a more in-depth portrait of adaptation in RNNs, in a way that is reminiscent of the diversity of activations in cortical networks. We plot in Figure \ref{fig:main_performance_fig}C the final state of learned $(n_i,s_i)$ combinations for the best performing network for each task. 

Allowing for heterogenous adaptation has a different impact on each task. First, the Copy task is the only task for which the use of heterogeneous adaptation was not beneficial, although this scenario also reaches an accuracy of 1.0 like the others. This may be due to the fact that in the presence of a heterogeneous adaptation, certain combinations $(n_i,s_i)$ scatter away from the $s=0$ axis (see Figure \ref{fig:main_performance_fig}C-left), while they remain on it with homogeneous adaptation. For psMNIST, heterogeneous adaptation outperforms the static scenario but not necessarily the homogeneous scenario. Finally heterogeneous adaptation is beneficial for the PTB task as it outperforms all other scenarios in terms of final performance, and matches homogeneous adaptation in terms of learning speed. Interestingly this spectrum of impact of heterogeneous adaptation on performance may be related to a given task need for computation across different timescales. We further investigate heterogeneous adaptation in \S\ref{ss:results:transfer} suggesting its usefulness in transfer learning experiments.

\subsection{Signal propagation is influenced by activation function shape}
 
Recent work \cite{nnRNN} investigates the interplay between long-term and short-term processing, and highlights a spectrum of connectivity structure using the Schur decomposition of learned weight matrices in RNNs. At one end, we find tasks that are highly dependent on hidden-states dynamics and on long-term memory (copy), and at the other end, tasks that give more importance to encoding efficiently and short-term computation (PTB). The psMNIST task lies in the middle. 
In this section, we analyze the impact of the activation function on signal propagation, both from a stability perspective with MLEs and from an information theoretic perspective with mutual information. These measurements complement each other and reflect the different computational properties of the tasks. In our analysis, we suggest mechanisms that lead to the observed adaptation dynamics.
\begin{figure}[t]
    \centering
    \includegraphics[width=\textwidth]{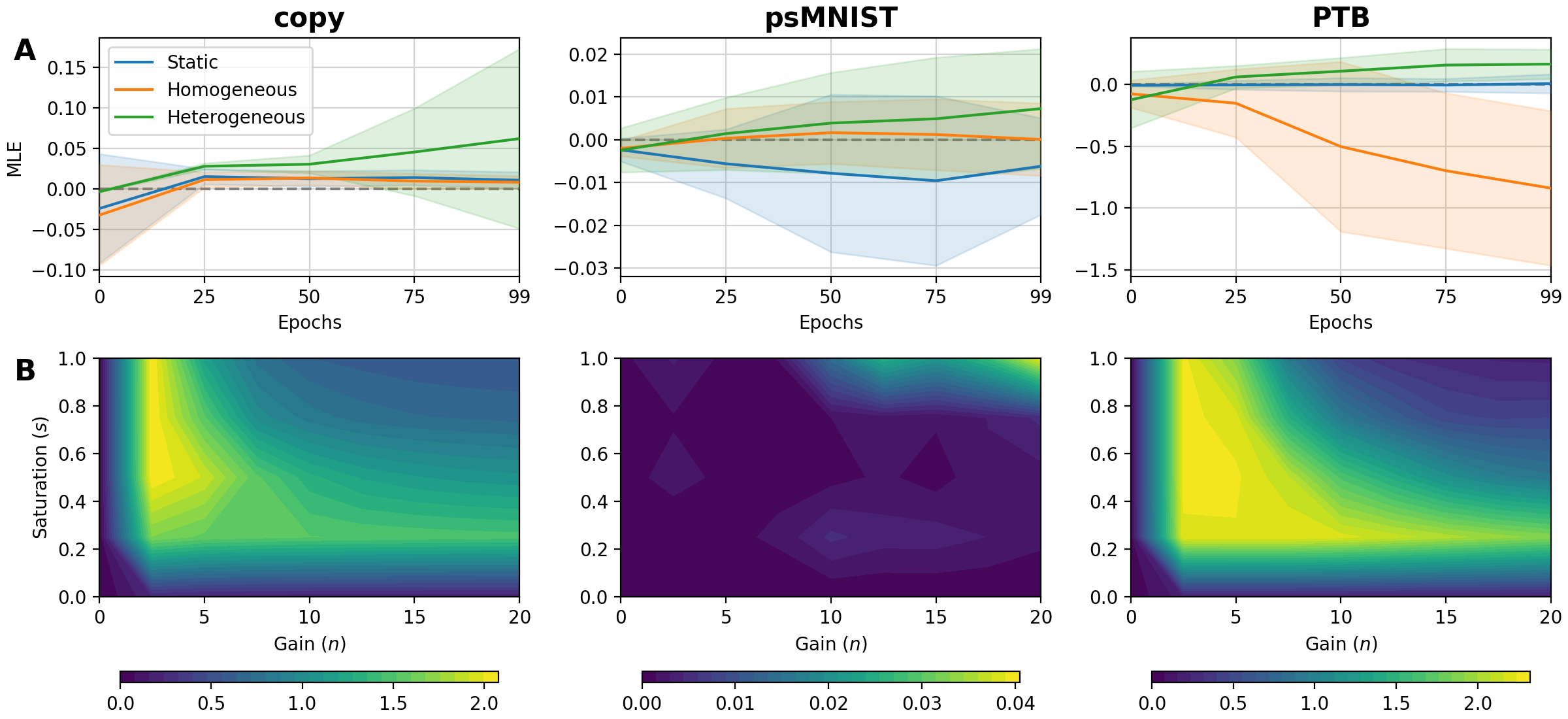}
    \caption{\textit{Signal-propagation measurements}. \textbf{A.} Maximal Lyapunov exponents across epochs. Line corresponds to the average over the trajectories associated with the first quartile in final performance. Shaded regions indicate one standard deviation.  \textbf{B.} Mutual information estimation landscape. Average over initialization (see calculation details in Appendix \S\ref{appendix:MI_calculation}).}
    \label{fig:signals_fig}
\end{figure}

\paragraph{Copy:~hidden-state dynamics and Lyapunov exponents} For its high reliance on computation with hidden-state dynamics, we investigate adaptation in the Copy task through an analysis of the Lyapunov exponents over training. This task requires the networks to retain information, with almost no regard to ongoing computation.  To this end, dynamics at the edge of chaos are particularly well suited as a way through the hidden-states dynamics to store information~\cite{Schoenholz:2016vb,Legenstein:2007th}. 
Lyapunov exponents are kept close to zero in the case of homogeneous adaptation, but become more positive with heterogeneous adaptation. Considering the respective impact of these two adaptation scenarios on performance, this emphasizes that maintaining a system at the edge of chaos is the preferable mechanism. Finally, and unlike the other tasks which will be covered below, mutual information does not provide significant insight on the adaptation dynamics observed.

\paragraph{PTB:~efficient encoding and mutual information} On the other end in terms of computational properties, the PTB character-level prediction task gives more importance to information processing and computation across different timescales. 
Through the different scenarios of adaptation (see Figure \ref{fig:signals_fig}A), the MLEs vary considerably, indicating a wide range of quantitatively different internal dynamics with yet almost the same test performance. For instance, we see a sharp decrease over in the MLEs training for the homogeneous scenario, explained by the associated best performing combinations in Figure \ref{fig:main_performance_fig}B moving towards negative saturation and high gain levels, a region with particularly negative MLEs for random weights (see Figure \ref{fig:Panel_1:shape_MLE_norm}E). 
Hence Lyapunov exponents are not enough to understand adaptation dynamics, however mutual information tells a different story. Figure \ref{fig:signals_fig}B presents the landscape of mutual information $I(H_t, X_{t-1:t})$. The observed shape parameters dynamics in Figure \ref{fig:main_performance_fig}B-right seem evolve to minimize mutual information. This observation suggests a mechanism by which activation functions adapt in tasks with an emphasis on short term computation, and provides an interesting avenue for future exploration.

\paragraph{psMNIST: balancing hidden-state dynamics and encoding efficiency} This task requires an interesting balance between long-term and short-term computational properties. To better understand how these two come into play, we need both the Lyapunov exponents and the landscape of mutual information. 
We observe that the MLEs for the best performing models stay close to zero throughout training for all the adaptation scenarios; in fact, the better the models of an adaptation scenario perform on this task, the closer the MLEs are to zero.
Indeed, by comparing the adaptive dynamics of Figure \ref{fig:main_performance_fig}B-middle and the contour plot of the MLEs of Figure \ref{fig:Panel_1:shape_MLE_norm}E, we find that the adaptation dynamics aim to bring the MLEs closer to zero.
This reinforces our claim above of increased performance in edge-of-chaos regimes for tasks with a reliance on hidden-state dynamics. 
In parallel, we observe that the regions with lowest MI seem to also act as attracting sets for the adaptation dynamics, in a similar way to PTB.
Just as this task requires a balance between hidden-state dynamics and encoding efficiency, both the Lyapunov exponents and mutual information help shed some light the observed adaptation dynamics.

\subsection{Transfer learning experiments with adaptation}\label{ss:results:transfer}

\begin{wrapfigure}{R}{0.4\textwidth}
  \begin{center}
    \includegraphics[width=0.38\textwidth]{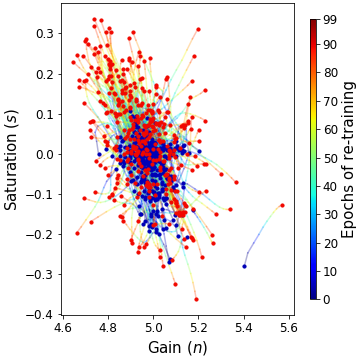}
  \end{center}
  \caption{Trajectories of the shape parameters during retraining on the modified MNIST images.}
  \label{fig:shape_trajectories}
\end{wrapfigure}

In neuroscience, the term adaptation is mostly used to describe processes that occur on short timescales and at a neuron level which have been shown to account for changes in stimulus statistics \cite{adaptation}. This mechanism is naturally linked to the concept of transfer learning in AI where one seeks systems where minimal changes in parameters allow adaptation from learned tasks to novel ones.
To see if changes in single neurons activation could offer transfer advantages in ANNs, we design a novel task using the psMNIST test data set where the images are rotated by 45°. The goal is for a trained network to adapt to this change in input structure by only changing its activation function parameters. To evaluate this, we split rotated images into training and test sets. Each set contains approximately 5k images and the same number of images per digit.
We then briefly retrain heterogeneous activation parameters $(n_i,s_i)$ on this rotated data set using the heterogeneous adaptation scenario. For initialization, we take the parameters (including the $(n_i,s_i)$'s) that resulted from training with normal images, also under the heterogeneous adaptation scenario (see Figure~\ref{fig:main_performance_fig}C-right).
Before retraining, the networks achieved an accuracy of 94\% on the original data set, this fell to 42\% after rotation. Retraining $(n,s)$ allowed the networks to recover classification accuracy up to 56\%. This shows that simply allowing the activation functions to adapt can recover over a quarter of lost performance (over 25\%).

An example of $(n,s)$ trajectories during retraining is showed in Figure \ref{fig:shape_trajectories}. Like in the PTB plot of Figure \ref{fig:main_performance_fig}C, the cloud of $(n,s)$ parameters expands, suggesting that a diversification in nonlinearity shapes is needed to adapt to the change in task and is consistent with neural diversity in the brain. As we further discuss in section \ref{s:conclusion}, this suggests interesting avenues to implement rapid adaptation protocols for ANNs.

\section{Conclusion \& Discussion}\label{s:conclusion}
In this paper, we provide an experimentally-based analysis of the impact of activation functions shape and adaptation in RNNs.
We introduce a novel nonlinear activation function inspired by IF-curves of biological neurons, parameterized by a degree of saturation and gain which permit interpolation between well-known activation functions.
We include these parameters in the network's training procedure, thus allowing single neuron response to adapt over the course of optimization. 
We investigate two scenarios, one where all neurons share the same activation parameters (homogeneous), and another where we enable diversity to emerge with each neuron having distinct activation properties (heterogeneous).
Using numerical experiments on the copy, the permuted sequential MNIST and the character-level language modeling prediction with the Penn Tree Bank corpus tasks, we investigate the impact of adaptation on learning dynamics. 

We first observe that the activation functions achieving the best performance differ from those normally used in machine learning, and depend on the input statistics of the tasks. This suggests that the choice of nonlinearity might not be as inconsequential as currently believed in the ML community. 
This observation is further emphasized when considering the learned activation functions within the adaptation scenarios. We found homogeneous adaptation to improve learning speed and overall performance. While heterogeneous adaptation results are more nuanced we observed that in a task which requires ongoing computations across several timescales (PTB), heterogeneous adaptation gives considerable gains and the solution leads to diversity across neurons.
In our analysis, we highlight the importance of activation functions in signal propagation through an analysis of the evolution of Lyapunov exponents over training, with insights from the mutual information between the last hidden-state and input distributions. We explore the idea that adaptation dynamics may be explained by a trade-off between hidden-states dynamics for long-term dependencies and complex computations and encoding efficiency.
Also, our preliminary results indicate that nonlinearity adaptation can help accuracy after a change in input statistics. 

Finally, we identify a promising use for rapidly adaptive activation functions in ANNs. The results from \S\ref{ss:results:transfer} indicate that individual neuron adaptation can help mitigate changes in input statistics (just like the brain). Single neuron activations are thus a great candidate for an online adaptation process. Here, we envision a single adaptive function that could be learned throughout training, and deployed at inference time, acting locally on individual neurons and having a similar role as input normalization. In contrast, this strategy would be difficult to implement on, say, connectivity matrices. This work is currently in progress and holds promise not only to improve ANNs, but to explore coding properties of neural adaption in the brain by enabling models and experimental predictions.

\section*{Acknowledgments}

We would like to thank Kyle Goyette for useful discussions. This work was partially funded by: IVADO (l'institut de valorisation des donn\'{e}es) [\emph{V.G.}, \emph{G.W.}, \emph{S.H.}], NIH grant R01GM135929 [\emph{G.W.}]; NSERC Discovery Grant (RGPIN-2018-04821), FRQNT Young Investigator Startup Program (2019-NC-253251), and FRQS Research Scholar Award, Junior 1 (LAJGU0401-253188) [\emph{G.L.}] The content is solely the responsibility of the authors and does not necessarily represent the official views of the funding agencies.

\bibliography{main, main_2}
\bibliographystyle{unsrt}

\newpage
\appendix
\begin{center}
{\bf Supplementary Material for:}\\
Advantages of biologically-inspired adaptive neural activation in RNNs during learning
\end{center}

\setcounter{thm}{0}
\setcounter{lemma}{0}
\setcounter{prop}{0}

\section{Experimental details}\label{appendix:task_setup}

\subsection{Experimental setups}

\begin{table}[H]
    \centering
    \begin{tabular}{l| c c c c c}
    \toprule
        Task & LR & hid & LR-scheduler & Rec. init. & In init.\\
    \midrule
        copy &  $10^{-4}$ & 128 & & Henaff & Glorot normal \\
        psMNIST & $10^{-4}$ & 400 & & Henaff & Kaiming \\
        PTB & $10^{-4}$ & 600 & ReduceLROnPlateau & Henaff & Kaiming\\
    \bottomrule
    \end{tabular}
    \vspace{8pt}
    \caption{\textit{Task-dependent hyperparameters}, where "hid" is hidden state size, "LR" is learning rate, "Rec. init." and "In init." are the initialization scheme for respectively the state transition matrix and the input weights.}
    \label{tab:hyperparams}
\end{table}

We trained the networks for 100K iterations for the Copy task, and 100 epochs for the others. We investigated different learning rates (LR $\in \{10^{-3}, 10^{-4}, 10^{-5}, 10^{-6}\}$), and settled on those in the Table \ref{tab:hyperparams} for each task.

Independently of the task, we used Cross-entropy loss as our loss function and the Adam \cite{Adam} optimizer. We experimented with the RMSprop optimizer (introduced in \citeNew{Hinton:RMSprop}, first used in \citeNew{DBLP:journals/corr/Graves13}) with smoothing constant $\alpha = 0.99$ and no weight decay, which yielded similar results. The initialization schemes in Table \ref{tab:hyperparams} for the recurrent weights and input weights refer to :
\begin{itemize}
    \item Henaff : Random orthogonal matrices following \cite{henaff}.
    \item Glorot Normal : Normally distributed weights following \citeNew{Glorot:normal_init}. Implemented in the \texttt{torch.nn} initialization library as \verb+xavier_normal_+.
    \item Kaiming : Normally distributed weights following \citeNew{Kaiming:init_&_PReLU}. Also known as He initialization. Implemented in the \texttt{torch.nn} initialization library as \verb+kaiming_normal_+.
\end{itemize}

\paragraph{Shape parameters} The initialization grid for the shape parameters used throughout this work is $N\times S$, where $N= \{1.0\} \cup \{1.25k\ :\ 1 \leq k \leq 16\}$ and $S = \{0.0,\ 0.25,\ 0.5,\ 0.75,\ 1.0\}$ such that $|N| = 17$ and $|S| = 5$. In the heterogeneous adaptation scenario, both $\boldsymbol{n}$ and $\boldsymbol{s}$ vectors are initialized with the same value for each component.

\subsection{Pytorch autograd implementation of gamma}

We implement $\gamma(x;n,s)$ as a Pytorch autograd Function with corresponding Pytorch Module. To allow for activation function adaptation, we further include the shape parameters in the backpropagation algorithm. We do so by defining the gradient of $\gamma$ with respect to the input and parameters. We can rewrite $\gamma(x;n,s)$ as :
\begin{equation}
    \gamma(x;n,s) = \frac{(1-s)}{n}\gamma_1(nx) + s\sigma(nx)
\end{equation}
where $\sigma(x)$ is the sigmoid activation function. With this notation, the partial derivatives of $\gamma$ with respect to $x$ (or total derivative), $n$ and $s$ are
\begin{align}
    \gamma'(x;n,s) = \frac{\partial}{\partial x}\gamma(x;n,s) &=  (1-s)\sigma(nx) + n s \sigma(nx)(1- \sigma(nx))\\ 
    \frac{\partial}{\partial n} \gamma(x;n,s) &= \frac{1-s}{n}(x \sigma(nx) - \frac{\gamma_1(nx)}{n}) + s x \sigma (nx)(1-\sigma(nx))  \label{eq:gamma'_n} \\
    \frac{\partial}{\partial s} \gamma(x;n,s) &= \sigma(nx)-\frac{\gamma_1(nx)}{n}
\end{align}

\section{A primer on Lyapunov exponents} \label{LE_intro}

In this section we are first going to give a bit of theoretical background on Lyapunov exponents. Exponential explosion and vanishing of long products of Jacobian matrices is a long studied topic in dynamical systems theory, where an extensive amount of tools have been developed in order to understand these products. Thus one can hope to leverage these tools in order to better understand the exploding and vanishing gradient problem in the context of RNNs. 
 
\subsection{Definition of Lyapunov exponents}
Let $F:X\rightarrow X$ be a continuously differentiable function, and consider the discrete dynamical system $(F,X,T)$ defined by
\begin{align}
    x_{t+1} &= F(x_t)
\end{align}
for all $t\in T$, where $X$ is the phase space, and $T$ the time range. We would like to gain an intuition for how trajectories of the mentioned dynamical system behave under small perturbations. 

Let $x_t$ and $x_t'$ be two trajectories with initial conditions $x_0$ and $x_0'$, such that $|x_0-x_0'|$ is sufficiently small. 

Defining $\epsilon_t = x_t'-x_t$, we get by the first order Taylor expansion 
\begin{align}
    x_{t+1}' &= F(x_t')\\
    &=F(x_t+\epsilon_t)\\
    &=F(x_t)+DF(x_t)\cdot\epsilon_t+O(|\epsilon_t|^2)\\
    &=x_{t+1}+DF(x_t)\cdot\epsilon_t+O(|\epsilon_t|^2)
\end{align}

Substracting $x_{t+1}$ both sides we get the variational equation
\begin{align}
    \epsilon_{t+1} &= DF(x_t) \cdot \epsilon_t + O(|\epsilon_t|^2)\\
    &\approx \prod_{k=0}^{t}DF(x_k)\cdot \epsilon_0\\
    &=DF^{t+1}(x_0) \cdot \epsilon_0
\end{align}
(Here $DF^{t+1}(x_0)$ is an abuse of notation for the Jacobian of the $(t+1)$-th iterate of $F$, evaluated at $x_0$). Intuitively the ratio $\frac{\|\epsilon_t\|}{\|\epsilon_0\|} = \frac{\|DF^{t}(x_0)\cdot \epsilon_0\|}{\|\epsilon_0\|}$ describes the expansion/contraction rate after $t$ time steps if our initial perturbation was $\epsilon_0$, which motivates the following definition:

Let $x_0,w\in X$, define \begin{align}
    \lambda(x_0,w) &\overset{\mathrm{def}}{=} \lim_{m \rightarrow \infty}\frac{1}{m} \ln \prod_{t=1}^{m} \frac{\|DF^{t}(x_0)\cdot w\|}{\|w\|}\\ 
    & = \lim_{m \rightarrow \infty}\frac{1}{m}  \sum_{t=1}^{m} \ln \frac{\|DF^{t}(x_0)\cdot w\|}{\|w\|} \end{align}
Thus $\lambda(x_0,w)$ measures the average rate of expansion/contraction over an infinite time horizon of the trajectory starting at $x_0$, if it has been given an initial perturbation $w$. Note that once $x_0$ and $w$ have been fixed, the quantity $\lambda(x_0,w)$ is intrinsic to the discrete dynamical system defined by $x_{t+1} = F(x_t)$. We call $\lambda(x_0,w)$ a \textbf{Lyapunov exponent} of the mentioned dynamical system.

Since the Lyapunov exponents describe the the average rate of expansion/contraction for long products of Jacobian matrices, it doesn't sound too surprising that they might provide an interesting perspective to study the exploding and vanishing gradient problem in RNNs. To give a complete picture of the analogy to RNNs, one can think of $x_t$ as the hidden state at time $t$, and $F$ can be seen as the function defined in the RNN cell. The only difference is that in RNNs we have inputs at every time steps, and thus the function $F$ changes at every time step. This is the distinction between autonomous and non-autonomous dynamical systems, which is explained in more detail in the upcoming subsection \ref{link_to_rnn}.

Finally, let us remark that the expression in the above definition of Lyapunov exponents is not always well defined. This will be the topic of the next subsection \ref{oseledets_section}, where we are presenting Oseledets theorem which gives exact conditions for when the above expression in well-defined.  

\subsection{Oseledets theorem} \label{oseledets_section}

As already stated, we bypassed the fact that the limit in the definition of $\lambda(x_0,w)$ might not actually exists. In fact this is the result of the well-known \textit{Oseledets theorem}, but before stating the theorem let us point out a definition.

\paragraph{Definition.} A \textit{cocycle} of an autonomous dynamical system $(F,X,T)$ is a map $C: X \times T \rightarrow \mathbb{R}^{n\times n}$ satisfying:
\begin{itemize}
    \item $C(x_0,0) = \textrm{Id}$ 
    \item $C(x_0,t+s) = C(x_t,s)C(x_0,t)$ for all $x_0 \in X$ and $s,t \in T$
\end{itemize}

\paragraph{Oseledets theorem.} (sometimes referred to as Oseledets \textit{multiplicative ergodic theorem}) Let $\mu$ be an ergodic invariant measure on $X$, and let $C$ be a cocycle of a dynamical system $(F,X,T)$ such that for each $t\in T$, the maps $x \mapsto \log \|C(x,t)\|$ and $x \mapsto \log \|C(x,t)^{-1}\|$ are $L^1$-integrable with respect to $\mu$. Then for $\mu$-almost all $x$ and each non-zero vector $w \in \mathbb{R}^n$ the limit
\begin{align}
    \lambda(x,w)&=\lim _{t \rightarrow \infty} \frac{1}{t} \ln \frac{\|C(x, t) w\|}{\|w\|}\end{align}
exists and assumes, depending on $w$ but not on $x$, up to $n$ different values, called the Lyapunov exponents (giving rise to a more general definition) 

One can prove that the following matrix limit
\begin{align}
\Lambda &= \lim _{t \rightarrow \infty}[C(x,t)^TC(x,t)]^{1/2t}
\end{align}
exists, is symmetric positive-definite and its log-eigenvalues are the Lyapunov exponents. We call $\Lambda$ the \textit{Oseledets matrix}.

In order to make this definition a little bit more intuitive, let us come back to our original situation, and note that the terms $\prod_{k=0}^{t}DF(x_k)=DF^{t+1}(x_0)$ define a cocycle verifying the conditions of the theorem. Thus, in this case, the Lyapunov exponents are not only well defined, but there are up to $n$ distinct ones of them, and they are the log-eigenvalues of the following Oseledets matrix: 

\begin{align}\Lambda &= \lim _{t \rightarrow \infty}[DF^{t}(x_0)^T\cdot DF^{t}(x_0)]^{1/2t}\end{align}

Let us now consider the singular value decomposition of $DF^{t}(x_0)$,
\begin{align} DF^{t}(x_0)V(x_0,t)&=U(x_0,t)\Sigma(x_0,t)\end{align}
where $\Sigma(x_0,t)$ is a diagonal matrix composed of the singular values $\sigma_1(x_0,t)\geq\ldots\geq\sigma_n(x_0,t) \geq 0$, and $U(x_0,t)$ as well as $V(x_0,t)$ are orthogonal matrices, composed column-wise of the left and right singular vectors respectively. Then 

\begin{align}\Lambda &= \lim _{t \rightarrow \infty} V(x_0,t)^T \Sigma(x_0,t)^{1/t}V(x_0,t)\end{align}

Thus, for large $t$, the log-eigenvalues of $\Lambda$ can be approximated by $\frac{1}{t}\ln \sigma_i(x_o,t)$'s, which can be thought of as the average singular value along an infinite time horizon. \ul{It turns out that for ergodic systems, the Lyapunov exponents are independent of initial conditions $x_0$. Thus, intuitively, Lyapunov exponents are topological quantities intrinsic to the dynamical system that describe the average amount of instability along infinite time horizons.}

In order to understand how this instability manifests along each direction, let us further look what we can say about the vectors associated with the individual Lyapunov exponents. If we denote $\lambda^{(1)}\geq \lambda^{(2)}\geq \ldots \geq \lambda^{(s)}$ the \textit{distinct} Lyapunov exponents, and $v_i(x_0)$ the corresponding vector of the matrix $\lim _{t \rightarrow \infty} V(x_0,t)$, then let us define the nested subspaces
\begin{align}S_j(x_0)= \textrm{span}\{v_i(x_0) | i=j,j+1,\ldots,s\}\end{align} for all $j=1,2,\ldots,s$, and take a vector $w_j(x_0) \in S_j(x_0) \setminus S_{j+1}(x_0)$. Then $w_j(x_0)$ is orthogonal to all $v_i(x_0)$ with $i< j$, and has a non-zero projection onto $v_j(x_0)$ since $v_j(x_0) \notin S_{j+1}(x_0)$, and thus 

\begin{align} \|DF^t(x_0) \cdot w_j(x_0)\| &\sim e^{\lambda^{(j)}t}\end{align}

In particular, since $S_1(x_0)$ is the whole phase space $X$, and $S_2(x_0)$ is only a hyperplane in $X$ (a subset of Lebesgue measure zero), we have that for "almost all" $w \in X$:

\begin{align} \|DF^t(x_0)\cdot w\| &\sim e^{\lambda^{(1)}t} \end{align}

hence aligning with the direction of maximum Lyapunov exponent (MLE). In other words a randomly chosen vector, has a non-zero projection in the direction of the MLE with probability $1$, and thus over time the effect of the other exponents will become negligible. This motivates taking the MLE as a way of measuring the overall amount of stability or instability of a dynamical system. One typically distinguishes the cases, where the MLE is negative, zero and positive.

Thus computing MLEs, LEs and their corresponding subspaces can be a useful tool to understand the average expansion/ contraction rate as well as the corresponding directions of gradients in recurrent neural networks. 

\subsection{The QR algorithm}
It is generally not advised to calculate the Lyapunov exponents and the associated vectors using $DF^t(x_0)$ as this matrix becomes increasingly ill-conditioned. There is a known algorithm that in most cases allows to provide good estimates, called the \textit{QR algorithm}.

As a preliminary remark, let us emphasize that the right singular vectors of $DF(x_{t+1})$ do not necessarily match the left singular vectors of $DF(x_t)$, thus simply applying the singular value decomposition in order to calculate the Lyapunov exponents does not work.

Let us denote $J_t = DF(x_t)$ for each time step $t=0,1,2,\ldots$, then lets us pick an orthogonal matrix $Q_0$, and compute $Z_0=J_0 Q_0$. Then let us perform the QR decomposition $Z_0=Q_1R_1$. Let us further assume that $J_0$ is invertible and we are imposing that the diagonal elements of $R_1$ are non-negative (which we can), thus making the QR decomposition unique. 

In the next step, we compute $Z_1=J_1Q_1$ and perform the QR decomposition $Z_1=Q_2R_2$, where again we are imposing the diagonal elements of $R_2$ to be non-negative.

Continuing in this fashion at each time step $k$, we then have the identity  $J_k = Q_{k+1}R_{k+1}Q_k^T$, and thus 
\begin{align}DF^{t+1}(x_0) &= \prod_{k=0}^{t}J_k\\ 
&= Q_{t+1}(R_t\cdot \ldots \cdot R_1)Q_0^T\end{align}

It turns out that, as long as the dynamical system is "regular", we can then compute the $i$-th Lyapunov exponent via 
\begin{align}\lambda_i &= \lim_{t \rightarrow \infty} \frac{1}{t} \sum_{k=1}^{t} \ln(R_k)_{ii}\end{align}
where the Lyapunov exponents are ordered $\lambda_1\geq \lambda_2\geq \ldots \geq \lambda_n$ as explained in \cite{bennettin} and \cite{DIECI1995}.

\subsection{Link to RNNs} \label{link_to_rnn}

Recalling the update equation of an RNN:

\begin{align}h_{t+1} &= \phi(Vh_t + Ux_{t+1}+b)\end{align}

for $t=0,1,\ldots$, and by denoting $F(h,x) = \phi(Vh+Ux+b)$, we can see that 
\begin{align}\tilde h_{t+1} &= F(\tilde h_t,0)\end{align}
defines an autonomous discrete dynamical system (DS1), while

\begin{align}h_{t+1} &= F(h_t,x_{t+1})\end{align}

defines a non-autonomous discrete dynamical system (DS2).

For (DS1), the machinery that we have developed over the last subsections is directly applicable, as we are in the autonomous case. For instance, we can compute the Lyapunov exponents of recurrent neural network over the course of training using the QR algorithm, and in particular observe the evolution of the maximum Lyapunov exponent (MLE), as a means to measure the amount of instability or chaos in the network. For example in the case of a linear RNN with a unitary or orthogonal connectivity matrix, all LEs are equal to zero, and thus no expansion nor contraction is happening. If all LEs are negative, we are in the contracting regime, where every point eventually will approach an attractor, thus producing a vanishing gradient. For instance, \cite{Bengio:1994do} showed that storing information in a fixed-size state vector (as is the case in a vanilla RNN) over sufficiently long time horizon in a stable way necessarily leads to vanishing gradients when back-propagating through time (here stable means insensitive to small input perturbations).

The natural question arises whether and to what extent the machinery will stay valid for (DS2). It turns out that one can use the theory of Random Dynamical Systems Theory, where Oseledet's multiplicative ergodic theorem holds under some stationarity assumption of the underlying distribution generating the inputs $x_t$ as stated in \cite{arnold}. However in this paper we are just we are just making use of the machinery developed for (DS1), by computing Lyapunov exponents for trained RNNs but computed without inputs ($x_t =0$ for all $t$).

\section{A primer on mutual information} \label{MI_intro}

In this section we will give some basic definitions and propositions in regards to mutual information loosely following the book \cite{Cover2006}.

\subsection{Entropy}
\begin{defn}[Entropy]\label{def:entropy} Let $X$ be a discrete random variable with support $\mathcal{X}$ and probability density function $p_X$, then we define the \textit{entropy} of $X$ as 
\begin{align}
    \mathcal{H}(X) &\myeq -\mathbb{E}_{x \sim p_X} [\log (p_X(x))]\\
    &= - \sum_{x \in \mathcal{X}} p_X(x) \log (p_X(x))
\end{align}
\end{defn}

\paragraph{Intuition.} In Shannon's "fundamental problem of communication" a receiver needs to identify what data was generated by data generating source, based on the signal it receives via a potentially noisy communication channel. Intuitively, the event of the data generating source producing a low-probability value carries a lot more "information" than if it were a high-probability value. One way to quantify this "information" is via the quantity $-\log p$, where $p$ is the probability of the value generated by the source. As we can see for $p=1$, the amount of information becomes $-\log p = 0$, which is the minimal possible value, while $-\log p$ approaches infinity, as $p \rightarrow 0^{+}$. Then we can see that the entropy $\mathcal{H}(X)$ is the expected amount of "information" carried by $X$ (here $X$ can be seen as the random variable associated to the underlying data generating distribution of the source).

\begin{defn}[Conditional Entropy]\label{def:conditional_entropy}
If we further consider a discrete random variable $Y$ with support $\mathcal{Y}$ and probability density function $p_Y$, we define the \textit{conditional entropy} of a discrete random variable $Y$ given $X$ as
\begin{align}
    \mathcal{H}(X|Y) &\myeq - \mathbb{E}_{(x,y) \sim p_{X,Y}} [\log (p_{Y|X}(y|x))]\\
    &= -\sum_{x \in \mathcal{X}} \sum_{y \in \mathcal{Y}} p_{X,Y}(x,y) \log (p_{Y|X}(y|x))
\end{align}
where $p_{X,Y}$ is the joint probability density function of $(X,Y)$ and $p_{Y|X}$ is the probability density function of the conditional distribution $Y|X$.
\end{defn}

\paragraph{Intuition.} If we use Definition \ref{def:entropy} for the random variable $X|Y=y$, we get: 
\begin{align}
    \mathcal{H}(X|Y=y) = - \sum_{x\in \mathcal{X}} \mathbb{P}(X=x|Y=y) \log \mathbb{P}(X=x|Y=y)
\end{align} which measures the average amount of "information" carried by $X$ given the event $Y=y$. One can then verify that $\mathcal{H}(X|Y)$ is the result of averaging $\mathcal{H}(X|Y=y)$ over all possible $y\in \mathcal{Y}$:

\begin{align}
    \mathbb{E}_{y \sim p_Y}\left[\mathcal{H}(X|Y=y)\right] &= \sum_{y\in \mathcal{Y}} \mathbb{P}(Y=y) \cdot \mathcal{H}(X|Y=y)\\
    &= -\sum_{x\in \mathcal{X},y\in \mathcal{Y}} \mathbb{P}(Y=y) \cdot \mathbb{P}(X=x|Y=y) \cdot  \log \mathbb{P}(X=x|Y=y)\\
    &= -\sum_{x\in \mathcal{X},y\in \mathcal{Y}} \mathbb{P}(X=x,Y=y) \cdot \log \mathbb{P}(X=x|Y=y)\\
    &= \mathcal{H}(X|Y)
\end{align}
Hence we stay consistent with the intuition from Definition \ref{def:entropy}: $\mathcal{H}(X|Y)$ describes the average amount of "information" carried by $X$ given $Y$. In the extreme case, $\mathcal{H}(X|Y) = 0$, if $X$ is fully determined by the knowledge of $Y$.
\begin{defn}[Joint Entropy] \label{def:joint_entropy}
Finally we define the \textit{joint entropy} of $X$ and $Y$ as 
\begin{align}
    \mathcal{H}(X,Y) &\myeq - \mathbb{E}_{x,y \sim p_{X,Y}} [\log (p_{X,Y}(x,y))]\\
    &= -\sum_{x \in \mathcal{X}, y \in \mathcal{Y}} p_{X,Y}(x,y) \log (p_{X,Y}(x,y))
\end{align}
\end{defn}

\paragraph{Intuition.} In order to see how Definition \ref{def:joint_entropy} relates to Definitions \ref{def:entropy} and \ref{def:conditional_entropy}, let us observe that $\mathcal{H}(X,Y) = \mathcal{H}(Y) + \mathcal{H}(X|Y)$. In fact, let us imagine we would first observe $Y$ and then observe $X$. $\mathcal{H}(Y)$ describes the average amount of information carried by $Y$, and $\mathcal{H}(X|Y)$ is the average amount of information carried by $X$ once $Y$ is known. It thus doesn't sound too surprising that this equates exactly the amount of information carried by both $X$ and $Y$. Let us verify this mathematically, 

\begin{align}
\mathcal{H}(X,Y) &= -\sum_{x\in \mathcal{X},y \in \mathcal{Y}} \mathbb{P}(X=x, Y=Y) \log \mathbb{P}(X=x,Y=y) \\
&= -\sum_{x\in \mathcal{X},y \in \mathcal{Y}} \mathbb{P}(X=x, Y=Y) \cdot \left[ \log \mathbb{P}(X=x|Y=y)+ \log \mathbb{P}(Y=y)\right]\\
&= \breaktowidth{8.3cm}{-\sum_{x\in \mathcal{X},y \in \mathcal{Y}} \mathbb{P}(X=x, Y=Y) \cdot \log \mathbb{P}(X=x|Y=y) -\sum_{x\in \mathcal{X},y \in \mathcal{Y}} \mathbb{P}(X=x, Y=Y) \cdot \log \mathbb{P}(Y=y)} \\
&= \mathcal{H}(X|Y) - \sum_{y \in \mathcal{Y}} \mathbb{P}( Y=Y) \cdot \log \mathbb{P}(Y=y)\\
&= \mathcal{H}(X|Y) + \mathcal{H}(Y)
\end{align}
The identity $\mathcal{H}(X,Y) = \mathcal{H}(X|Y) + \mathcal{H}(Y)$ is also referred to as the \textit{chain rule} of conditional entropy, which can be generalized to multiple random variables $X_1, \ldots, X_n$ (via a straightforward proof by induction) as
\begin{align}
    \mathcal{H}(X_1,\ldots X_n) &= \sum_{i=1}^n\mathcal{H}(X_i|X_1,\ldots,X_{i-1})
\end{align}
\subsection{Kullback-Leibler Divergence}
\begin{defn}[Kullback-Leibler Divergence]
The \textit{KL-divergence} between two discrete probability distributions $p$ and $q$ with common support $\mathcal{X}$ is defined by 
\begin{align}
    D_{KL}(p||q) &\myeq \mathbb{E}_{x \sim p}\left[\log \left(\frac{p(x)}{q(x)}\right)\right]\\
    &= \sum_{x \in \mathcal{X}} p(x) \log\left(\frac{p(x)}{q(x)}\right) \\
\end{align}
\end{defn}

\paragraph{Intuition.} The KL-divergence between $p$ and $q$, or also called the \textit{relative entropy} of $p$ with respect to $q$, measures the information lost when $q$ is used to approximate $p$. In other words, it quantifies how much $q$ deviates from $p$. Note that it does not satisfy the mathematical properties of a definition of a metric. We can rewrite 
\begin{align}
    D_{KL}(p||q) &= \textrm{CE}(p,q) - \mathcal{H}(p)
\end{align}
where \begin{align}\textrm{CE}(p,q) = - \sum_{x \in \mathcal{X}} p(x) \log q(x) \end{align} also called the \textit{cross-entropy} between $p$ and $q$, while \begin{align}\mathcal{H}(p)= \textrm{CE}(p,p) = - \sum_{x \in \mathcal{X}} p(x) \log p(x)\end{align} Hence if $p=q$ then $D_{KL}(p||q) = 0$. 
\subsection{Mutual information}
\begin{defn}[Mutual Information] Let $X$ and $Y$ be two random variables with probability density functions $p_X$ and $p_Y$ respectively and joint distribution $p_{X,Y}$, then we define the \textit{mutual information} of $X$ and $Y$ by 
\begin{align}
    \mathcal{I}(X;Y) &\myeq D_{KL}(p_{X,Y} || p_X \otimes p_Y)\\
    &= \sum_{x\in \mathcal{X}, y \in \mathcal{Y}} p_{X,Y}(x,y) \log \frac{p_{X,Y}(x,y)}{p_X(x) \cdot p_Y(y)}
\end{align}    
\end{defn}

\paragraph{Intuition.} Going back to the intuition of the KL-divergence, we see that the mutual information $\mathcal{I}(X;Y)$ between $X$ and $Y$, measures the information lost when approximating $(X,Y)$ as a pair of independent random variables, when in reality they are not. Alternatively, one can interpret $\mathcal{I}(X;Y)$ to measure the amount of "information" $X$ and $Y$ share. Note that when $X$ and $Y$ are indeed independent, $X$ and $Y$ do not share any information and  $\mathcal{I}(X;Y) =0$. Meanwhile, when $X=Y$, we can see that $\mathcal{I}(X;Y) = \mathcal{H}(X)$. We can thus see that mutual information and entropy are intimately related, which motivates an equivalent definition of the mutual information $\mathcal{I}(X;Y)$ in terms of entropy and conditional entropy. In fact one can prove that 

\begin{align}
    \mathcal{I}(X;Y) 
    &= \mathbb{E}_Y \left[D_{KL}(p_{X|Y}||p_X)\right]\\
    &= \mathcal{H}(X) - \mathcal{H}(X|Y)\\
    &= \mathcal{H}(Y) - \mathcal{H}(Y|X)\\
    &= \mathcal{H}(X) + \mathcal{H}(Y)- \mathcal{H}(X,Y)\\
    &= \mathcal{H}(X,Y) - \mathcal{H}(X|Y) -\mathcal{H}(Y|X)
\end{align}
where in the last two equalities we have used the chain rule of conditional entropy. Here $\mathcal{H}(X) - \mathcal{H}(X|Y)$ can be seen as the information gain carried by $X$, did we not know anything about $Y$. Also we see that the  mutual information $\mathcal{I}(X;Y)$ is symmetric in $X$ and $Y$. 

\begin{defn}[Conditional Mutual Information]
Further, let $Z$ be a third random variable, then we define the conditional mutual information between $X$ and $Y$ given $Z$ via 
\begin{align}
    \mathcal{I}(X;Y|Z) &\myeq \mathbb{E}_Z\left[D_{KL}(p_{X,Y|Z} || p_{X|Z} \otimes p_{Y|Z})\right]\\
    &= \mathcal{H}(X|Z) - \mathcal{H}(X|Y,Z)\\
    &= \mathcal{H}(Y|Z) - \mathcal{H}(Y|X,Z)\\
    &= \mathbb{E}_Z\left[\mathbb{E}_{Y|Z} \left[D_{KL}(p_{X|Y,Z}||p_{X|Z})\right]\right] \label{cond_IM}
\end{align}
where in the last equality we made use of the identity $p_{X|Y,Z}(x|y,z) =  \frac{p_{X,Y|Z}(x,y|z)}{p_{Y|Z}(y|z)}$ for all $x$, and all $(y,z) \in \textrm{supp}(p_Y) \times \textrm{supp}(p_Z)$. From this follows the following the chain rule for mutual information.
\end{defn}
\begin{lemma}[Chain Rule for Mutual Information]
The multivariate mutual information between the the random variables $X$, $Y$ and $Z$ is defined by 
\begin{align}
    \mathcal{I}(X;Y,Z) &= \mathcal{I}(X;Z) + \mathcal{I}(X;Y|Z)
\end{align}
\end{lemma}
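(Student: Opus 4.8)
The plan is to reduce all three quantities to (conditional) entropies by means of the identities already established, and then let the terms telescope. The essential observation that makes this work is that $\mathcal{I}(X;Y,Z)$ should be read as the mutual information between $X$ and the single \emph{compound} random variable $(Y,Z)$; with that reading the entropy form $\mathcal{I}(A;B) = \mathcal{H}(A) - \mathcal{H}(A\mid B)$ applies verbatim with $A = X$ and $B = (Y,Z)$, giving
\begin{align}
\mathcal{I}(X;Y,Z) &= \mathcal{H}(X) - \mathcal{H}(X\mid Y,Z).
\end{align}

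Next I would expand the right-hand side of the claimed identity. The first term is handled by the same entropy form, $\mathcal{I}(X;Z) = \mathcal{H}(X) - \mathcal{H}(X\mid Z)$, and the second is exactly the first line of the definition of conditional mutual information, $\mathcal{I}(X;Y\mid Z) = \mathcal{H}(X\mid Z) - \mathcal{H}(X\mid Y,Z)$. Summing the two and cancelling the $\mathcal{H}(X\mid Z)$ contributions leaves $\mathcal{H}(X) - \mathcal{H}(X\mid Y,Z)$, which is precisely the expression for $\mathcal{I}(X;Y,Z)$ obtained above; matching the two finishes the argument.

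As an independent check I would also carry out the purely definitional route: writing each mutual information as a sum over $(x,y,z)$ and inserting the factorization $\frac{p_{X,Y,Z}(x,y,z)}{p_X(x)\,p_{Y,Z}(y,z)} = \frac{p_{X,Z}(x,z)}{p_X(x)\,p_Z(z)} \cdot \frac{p_{X,Y\mid Z}(x,y\mid z)}{p_{X\mid Z}(x\mid z)\,p_{Y\mid Z}(y\mid z)}$ inside the logarithm, which splits $\mathcal{I}(X;Y,Z)$ into the $\mathcal{I}(X;Z)$ term and the $\mathcal{I}(X;Y\mid Z)$ term, after noting that the first factor does not depend on $y$ so that summing out $y$ recovers the marginal $p_{X,Z}$. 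The only point requiring any care in either route is the compound-variable bookkeeping: one must condition on the pair $(Y,Z)$ consistently and invoke the chain rule of conditional entropy in the same form in both $\mathcal{I}(X;Y,Z)$ and $\mathcal{I}(X;Y\mid Z)$, and one must dispose of the support/vanishing-denominator issues exactly as flagged in the definition of conditional mutual information. Given the identities already in hand no genuine obstacle remains; essentially all the content lives in the first-step identification of $(Y,Z)$ as a single random variable.
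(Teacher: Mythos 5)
Your proof is correct and follows essentially the same route as the paper: both arguments reduce everything to (conditional) entropies via $\mathcal{I}(A;B)=\mathcal{H}(A)-\mathcal{H}(A\mid B)$ together with the entropy form of conditional mutual information, and let the intermediate conditional-entropy term telescope away. If anything, your version is the more faithful one---you expand $\mathcal{I}(X;Y,Z)$ exactly as stated, using $\mathcal{I}(X;Y\mid Z)=\mathcal{H}(X\mid Z)-\mathcal{H}(X\mid Y,Z)$, whereas the paper's own proof expands the mirrored quantity $\mathcal{I}(X,Y;Z)$ into $\mathcal{I}(X;Z)+\mathcal{I}(Y;Z\mid X)$, which matches the lemma as written only after an implicit appeal to the symmetry of mutual information.
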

\begin{proof}
\begin{align}
    \mathcal{I}(X,Y;Z) &= \mathcal{H}(X,Y) - \mathcal{H}(X,Y|Z)\\
    &= \mathcal{H}(X)+ \mathcal{H}(Y|X) - \mathcal{H}(X|Z) - \mathcal{H}(Y|X,Z)\\
    &= (\mathcal{H}(X) - \mathcal{H}(X|Z)) + (\mathcal{H}(Y|X)- \mathcal{H}(Y|X,Z))\\
    &=  \mathcal{I}(X;Z) +  \mathcal{I}(Y;Z|X)
\end{align}
\end{proof}
By a straightforward induction argument we can prove the following more general version of the chain rule.

\begin{prop}[Chain Rule for Mutual information]\label{CRMI}
Let $X$ and $Z_1,\ldots,Z_n$ be random variables, then \begin{align}
    \mathcal{I}(X;Z_1,\ldots,Z_n) = \sum_{i=1}^n \mathcal{I}(X;Z_i|Z_1,\ldots,Z_{i-1})
\end{align}
\end{prop}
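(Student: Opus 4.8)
The plan is to prove the identity by induction on $n$, leaning entirely on the two-variable chain rule already established in the preceding Lemma, namely $\mathcal{I}(X;Y,Z) = \mathcal{I}(X;Z) + \mathcal{I}(X;Y\mid Z)$. The base case $n=1$ is immediate: the single summand $\mathcal{I}(X;Z_1\mid Z_1,\ldots,Z_0)$ has an empty conditioning list, so both sides collapse to $\mathcal{I}(X;Z_1)$; and the case $n=2$ is precisely the Lemma. Thus the only real content lies in the inductive step, where I would assume the formula for any $n-1$ random variables and deduce it for $n$.

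For the inductive step, the key move is to peel off the \emph{last} variable $Z_n$ rather than the first. Concretely, I would apply the two-variable chain rule with the substitution $Y \leftarrow Z_n$ and $Z \leftarrow (Z_1,\ldots,Z_{n-1})$, treating the tuple $(Z_1,\ldots,Z_{n-1})$ as a single vector-valued random variable, which is legitimate since mutual information is defined for arbitrary random variables. Using the invariance of $\mathcal{I}(X;\,\cdot\,)$ under reordering of the jointly considered arguments, this yields
\begin{align}
    \mathcal{I}(X;Z_1,\ldots,Z_n) = \mathcal{I}(X;Z_1,\ldots,Z_{n-1}) + \mathcal{I}(X;Z_n\mid Z_1,\ldots,Z_{n-1}).
\end{align}

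It then remains to apply the induction hypothesis to the first summand, expanding it as $\sum_{i=1}^{n-1} \mathcal{I}(X;Z_i\mid Z_1,\ldots,Z_{i-1})$, and to observe that the leftover term is exactly the $i=n$ summand of the target, so the two combine into $\sum_{i=1}^{n}$. The reason I would peel off $Z_n$ rather than $Z_1$ is precisely so that the remaining mutual information $\mathcal{I}(X;Z_1,\ldots,Z_{n-1})$ is \emph{unconditioned}, allowing the ordinary induction hypothesis to apply directly; peeling off $Z_1$ would instead leave $\mathcal{I}(X;Z_2,\ldots,Z_n\mid Z_1)$ and force me to first establish a conditional version of the chain rule, which is avoidable. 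I do not anticipate a genuine obstacle: the single point demanding care is the bookkeeping of the conditioning sets, i.e.\ checking that grouping $(Z_1,\ldots,Z_{n-1})$ as the conditioning variable in the Lemma produces exactly the canonical growing-prefix conditioning $Z_1,\ldots,Z_{i-1}$ in each term, and that the symmetry $\mathcal{I}(X;A,B)=\mathcal{I}(X;B,A)$ is being used to align the Lemma's argument ordering with that of the statement.
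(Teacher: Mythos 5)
Your proof is correct and matches the paper's approach exactly: the paper asserts that the proposition follows from the two-variable chain rule lemma ``by a straightforward induction argument'' without spelling it out, and your induction --- peeling off $Z_n$ by applying the lemma with $Y \leftarrow Z_n$ and $Z \leftarrow (Z_1,\ldots,Z_{n-1})$, then invoking the induction hypothesis on the unconditioned term $\mathcal{I}(X;Z_1,\ldots,Z_{n-1})$ --- is precisely that argument made explicit. Your remarks on why peeling the last variable (rather than the first) avoids needing a conditional version of the lemma, and on using the symmetry $\mathcal{I}(X;A,B)=\mathcal{I}(X;B,A)$ to align orderings, correctly fill in the only details the paper leaves implicit.
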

\subsection{Information Bottleneck and application to RNNs}\label{appendix:ss:IB}
The Information Bottleneck (IB) principle~\cite{Tishby:1999;IB,Tishby:2017} is an information theoretic method for extracting relevant information from an input variable $X$ about an output variable $Y$, via a representation $H$ of $X$ with respect to $Y$. In fact, one would like to minimize the Lagrangian 
\begin{align}
    \mathcal{I}(X;H)-\beta \cdot \mathcal{I}(H;Y)
\end{align}
where the positive constant $\beta$ (Lagrange multiplier) acts as a trade-off parameter between the complexity of the representation $\mathcal{I}(X;H)$ and the amount of preserved relevant information $\mathcal{I}(H;Y)$. The intuition here is that we would like $H$ to be as much of a compressed version of $X$ as possible (by minimizing $\mathcal{I}(X;H)$), while being as predictive as possible about the outcome $Y$ (by maximizing $\mathcal{I}(H;Y)$). How much we favour prediction over compression is quantified by the trade-off parameter $\beta>0$. 

\paragraph{Application to RNNs.} If we consider an RNN with inputs $x_1,\ldots,x_t$, hidden states $h_1,\ldots,h_t$ and an output $y_t$ at time step $t$, then one way of applying IB in this context is to use $h_t$ as the representation $H$. In other words we would like to minimize $\mathcal{I}(x_{1:t};h_t)-\beta \cdot \mathcal{I}(h_t;y_t)$, where $x_{1:t} = (x_1,\ldots,x_t)$. Here $\mathcal{I}(x_{1:t};h_t)$ can then be estimated via the following corollary of the chain rule of mutual information (proposition \ref{CRMI}).
\begin{cor} \label{IM_prop}
Let $x_1,\ldots,x_T$ be the random variables corresponding to the input sequence of a recurrent neural network. Then we have for all positive integers $t$,$k$, and $s$
\begin{align}
    \mathcal{I}(h_{t+k};x_{s:t}) &= \sum_{i=s}^t \mathbb{E}_{x_{s:i-1}}\left[\mathbb{E}_{x_i| x_{s:i-1}}\left[D_{KL}(p(h_{t+k}|x_{s:i})|| p(h_{t+k}|x_{s:i-1}))\right]\right]
\end{align}
where $x_{s:i} = (x_s,\ldots,x_i)$ if $i\geq s$ and $x_{s:i} = \emptyset$ otherwise.
\end{cor}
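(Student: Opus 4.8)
The plan is to apply the chain rule for mutual information (Proposition \ref{CRMI}) and then rewrite each resulting conditional mutual information term in its Kullback--Leibler form using the identity \eqref{cond_IM}.

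First I would instantiate Proposition \ref{CRMI} with $X = h_{t+k}$ and the ordered tuple of input variables $(Z_s,\ldots,Z_t) = (x_s,\ldots,x_t)$. This directly yields
\begin{align}
    \mathcal{I}(h_{t+k}; x_{s:t}) &= \sum_{i=s}^t \mathcal{I}(h_{t+k}; x_i \mid x_{s:i-1}),
\end{align}
where for the initial term ($i=s$) the conditioning block $x_{s:s-1}$ is empty by the stated convention, so that summand collapses to the ordinary mutual information $\mathcal{I}(h_{t+k}; x_s)$. Strictly, Proposition \ref{CRMI} is stated with the index starting at $1$, so I would first relabel $Z_j \myeq x_{s+j-1}$ for $j = 1,\ldots,t-s+1$ to match the hypothesis, then translate the resulting sum back into the $i = s,\ldots,t$ indexing; this is a purely cosmetic reindexing.

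Next I would expand each summand using \eqref{cond_IM} with the substitution $X = h_{t+k}$, $Y = x_i$, and $Z = x_{s:i-1}$, which gives
\begin{align}
    \mathcal{I}(h_{t+k}; x_i \mid x_{s:i-1}) &= \mathbb{E}_{x_{s:i-1}}\left[\mathbb{E}_{x_i\mid x_{s:i-1}}\left[D_{KL}\bigl(p(h_{t+k}\mid x_i, x_{s:i-1}) || p(h_{t+k}\mid x_{s:i-1})\bigr)\right]\right].
\end{align}
Finally, observing that the joint conditioning event $(x_i, x_{s:i-1})$ is exactly $x_{s:i}$, I would replace $p(h_{t+k}\mid x_i, x_{s:i-1})$ by $p(h_{t+k}\mid x_{s:i})$ and substitute back into the sum, recovering the claimed expression verbatim.

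The main obstacle here is not any substantive argument but rather index bookkeeping: one must check that the empty-conditioning convention ($x_{s:i} = \emptyset$ for $i < s$) makes the boundary term $i = s$ degenerate correctly, and that the nested expectations $\mathbb{E}_{x_{s:i-1}}$ and $\mathbb{E}_{x_i \mid x_{s:i-1}}$ produced by \eqref{cond_IM} line up with the nesting asserted in the corollary. Both verifications are immediate once the conventions are fixed, so the statement follows directly by combining Proposition \ref{CRMI} with \eqref{cond_IM}.
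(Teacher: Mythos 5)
Your proposal is correct and follows essentially the same route as the paper's own proof: instantiate the chain rule of Proposition \ref{CRMI} with $X = h_{t+k}$ and the inputs $x_s,\ldots,x_t$, then rewrite each conditional mutual information term via the identity \eqref{cond_IM}, noting that conditioning on $(x_i, x_{s:i-1})$ is the same as conditioning on $x_{s:i}$. Your additional bookkeeping on reindexing and the empty-conditioning boundary case $i=s$ is sound, if slightly more careful than the paper itself.
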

\begin{proof}
For any positive integers $t$, $k$ and $s$ with $t\geq s$, we get, using the chain rule of mutual information (proposition \ref{CRMI}), 
\begin{align}
    \mathcal{I}(h_{t+k}; x_{s:t}) &= \mathcal{I}(h_{t+k}; (x_t,\ldots,x_s))\\
    &=\sum_{i=s}^t \mathcal{I}(h_{t+k}; x_i | (x_{i-1},\ldots,x_s))\\
    &=\sum_{i=s}^t \mathcal{I}(h_{t+k}; x_i | x_{s:i-1})
\end{align}
Finally using the equation \ref{cond_IM} from above, we have 
\begin{align}
    \mathcal{I}(h_{t+k}; x_i | x_{s:i-1}) &=\mathbb{E}_{x_{s:i-1}}\left[\mathbb{E}_{x_i| x_{s:i-1}}\left[D_{KL}(p(h_{t+k}|x_{s:i})|| p(h_{t+k}|x_{s:i-1}))\right]\right]
\end{align}
proving proposition \ref{IM_prop}.
\end{proof}

\section{Impact of adaptation on performance : further results}\label{appendix:p_results}

In this appendix section, we show additional results on the performance through learning.

\begin{figure}[h]
    \centering
    \includegraphics[width=0.85\textwidth]{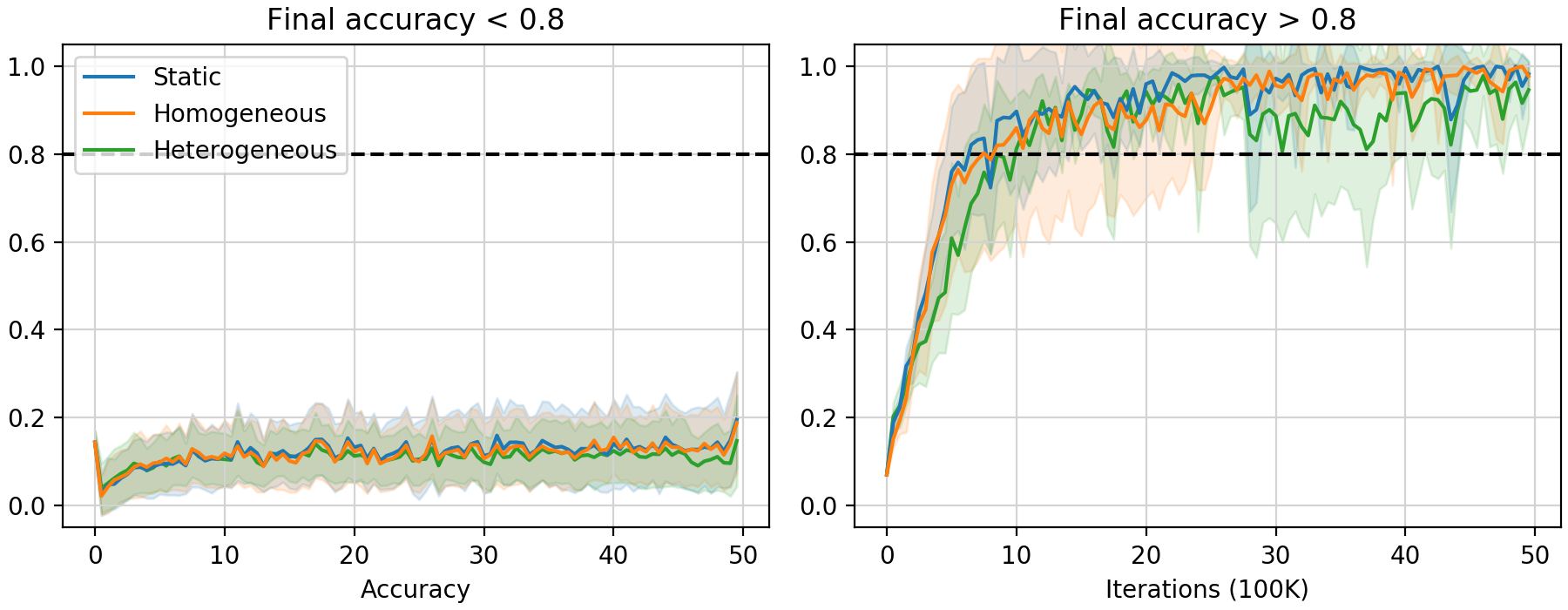}
    \caption{Accuracy on the copy task, for each scenario over training. The models are divided according to their final performance, plotted on the \textbf{left} if they do not exceed 0.8 in accuracy, and on the \textbf{right} if they do. Line corresponds to the average over the models (per scenario) and shaded regions indicated one standard deviation from the mean. }
    \label{fig:copy_performance_epochs}
\end{figure}

\begin{figure}[t]
    \centering
    \includegraphics[width=0.9\textwidth]{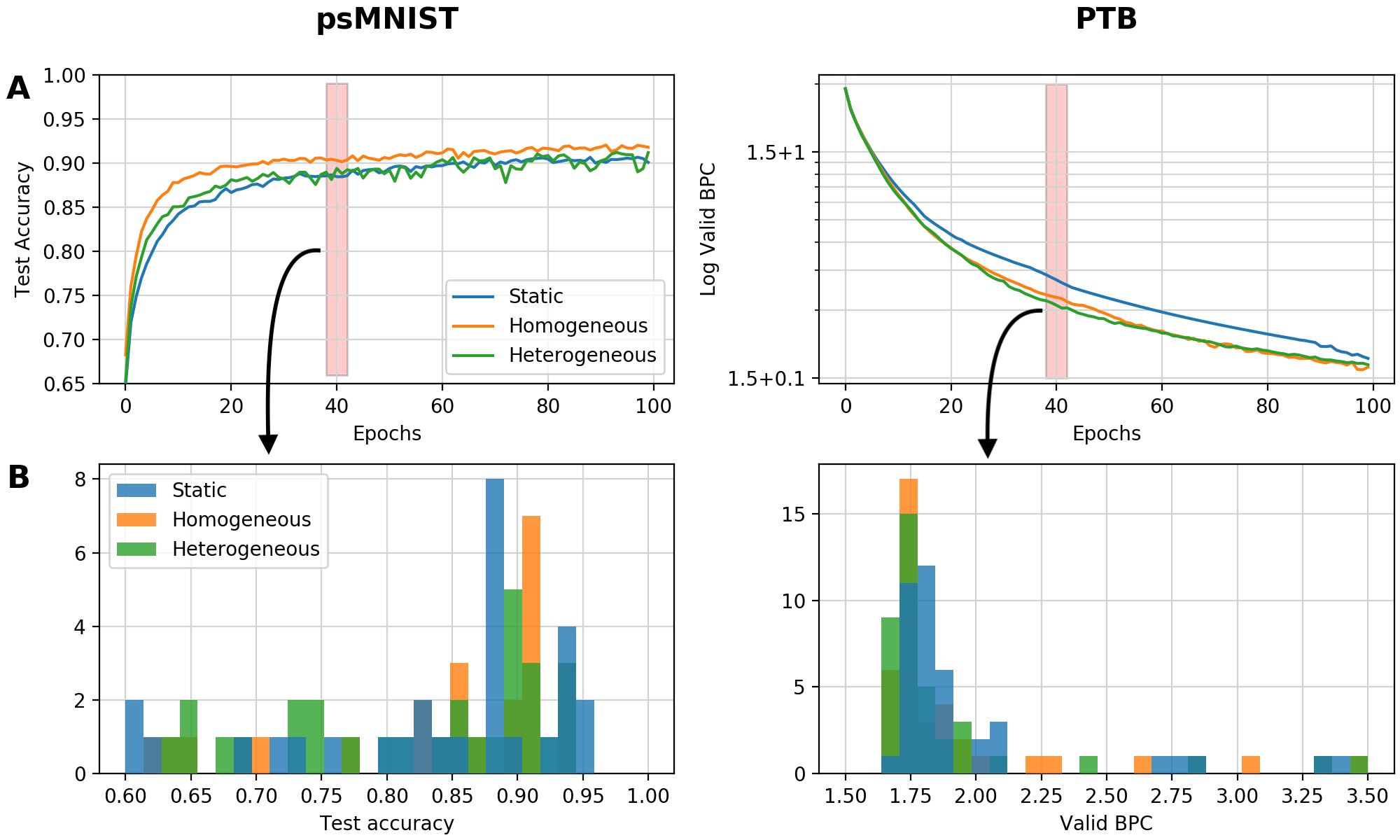}
    \caption{Accuracies and distributions for the psMNIST and PTB tasks, for each scenario over training. For both rows, legend inlet on the left applies to the whole row.  \textbf{A} Modal performance over training per scenario, for each task. Red rectangle acts as a visual aid for next row. \textbf{B.} Histogram of performances for each task and scenario at epoch 40. We point out the skewed nature of the distributions. Elements classified into bins are defined by a given combination of $n$, $s$ and adaptation scenario. }
    \label{fig:performance_epochs}
\end{figure}

For copy, we plot in Figure \ref{fig:copy_performance_epochs} the accuracy over training for each scenario. This reinforces what is mentioned in the main text, how models in the homogeneous adaptation setting perform similarly to the static setting. Those in the heterogeneous adaptation setting perform slightly worse, although the best performing still achieves an accuracy of 1.0. Also to reiterate what is mentioned in the main text, only models with saturation levels on the $s=0$ axis attain a decent accuracy. There is a clear distinction in terms of performance: either performance remains poor (which can be seen by the low mean and the standard deviation in Figure \ref{fig:copy_performance_epochs}-left), or the models perform well (Figure \ref{fig:copy_performance_epochs}-right).

For the two other sequential tasks (psMNIST, PTB), we plot in Figure \ref{fig:performance_epochs}A the modal performance over epochs. We found the distribution of performances over the parameters $(n,s)$ to be strongly skewed in the direction of the desired performances (positively skewed for PTB, negatively for psMNIST). For this reason, we opted for the distribution mode as a means of summarizing each scenario for a given epoch. To illustrate the distribution for a specific epoch (in the case 40) we refer to \ref{fig:performance_epochs}B. Neither an average nor a median would justly represent the data.

\section{Impact of the shape parameters on encoder efficiency}\label{appendix:MI_calculation}

This section explains how the mutual information experiments were conducted. We refer to Appendix \S\ref{MI_intro} above for a theoretical primer. 

As previously mentioned, the RNN used can be thought of as an encoder $p_{\boldsymbol{\theta}}(h_t|x_t)$ and a decoder $p_{\boldsymbol{\theta}}(\hat{y}_t|h_t)$ where $x_t, h_t, \hat{y}_t$ are realizations of the random variables $X_{\leq t}, H_t, \hat{Y}_t$, respectively the inputs, the hidden (or latent) state and the prediction made (or output), each depending on the time step $t$.
The decoder $p_{\boldsymbol{\theta}}(\hat{y}_t|h_t)$ is of little interest here since the RNN output is simply a linear readout of the internal hidden state. The encoder $p_{\boldsymbol{\theta}}(h_t|x_t)$ is however directly affected by the shape parameters $(n,s)$ and we are interested in quantifying the effect of these parameters on the encoder performance in the context of the PTB task. For these reasons, while we rely on the Information Bottleneck (see above \S\ref{appendix:ss:IB}) for an intuition of the desired interplay between encoding and decoding, we only compute the mutual information between the distributions of the encoder.

In order to isolate the effect of the shape parameters on the mutual information between the input $X_{\leq t}$ and the hidden state $H_t$ variables, we conducted our experiments on randomly initialized \cite{henaff} networks with manually specified values for $(n,s)$ and no subsequent training. Since the input is either a discrete or a continuous random variable (depending on the task) and the hidden state is continuous we used the mutual information estimator for discrete-continuous mixtures proposed in \cite{DBLP:conf/nips/GaoKOV17}. This estimator was shown to be better than previous state-of-the-art estimators for situations where "some variables are discrete, some continuous, and others are a mixture between continuous and discrete components".

The goal here being to isolate as much as possible the encoder from the internal dynamics, we chose to only consider the present input $x_t$ and the present hidden state $h_t$ to compute the mutual information estimate $\hat{\mathcal{I}}_{(n,s)}(h_t;x_t)$ that is plotted in Figure \ref{fig:signals_fig}B.
For the copy and PTB tasks, $x_t$ is a discrete random variable representing a digit (values 0 to 9) for copy and the index of an alphabet character for PTB (values 0 to 25). For the psMNIST task $x_t$ is continuous and represents the color of a pixel in gray scale (values 0 to 1). In all cases, $h_t$ is continuous and contains the activation values of the $N$ hidden neurons.

\section{Transfer learning experiments}\label{appendix:transfer}

For our transfer learning experiments, we alter the data in the MNIST dataset by imposing an anti-clockwise rotation of 45° over the entire image for each figure.
Only the networks which originally achieved an accuracy of over 94\% on the psMNIST classification task were used as initializations for the transfer learning task. Mean accuracy and standard deviation of the networks during retraining is plotted in Figure \ref{fig:appendix:shapeonly_performance} as well as the accuracy throughout retraining for the top performing network at the end of the retraining.
\begin{figure}[H]
    \centering
    \includegraphics[width=0.6\textwidth]{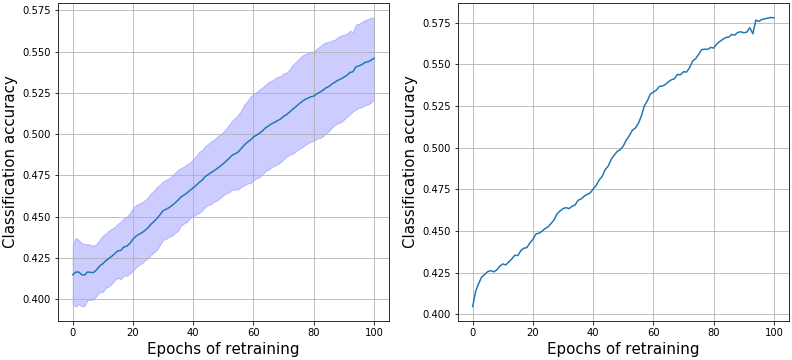}
    \caption{\textbf{Left:} Mean and standard deviation for accuracy during retraining of the top 5 performing networks. \textbf{Right:} Best performing network after retraining on the new task.}
    \label{fig:appendix:shapeonly_performance}
    \vspace{-10pt}
\end{figure}

In the article, the shape parameters trajectories during retraining for the transfer task were plotted for only one network for clarity. In Figure \ref{fig:appendix:shape_trajectories} we show the same plot but for the 4 remaining tested networks. What we can observe is that the expansion in parameter space mentioned in the article is present in all subplots. It is a generally observable phenomenon which supports the idea of the article that a diversification in nonlinearity shapes is needed to adapt to the change in task.

\begin{figure}[H]
\centering
\includegraphics[width=\textwidth]{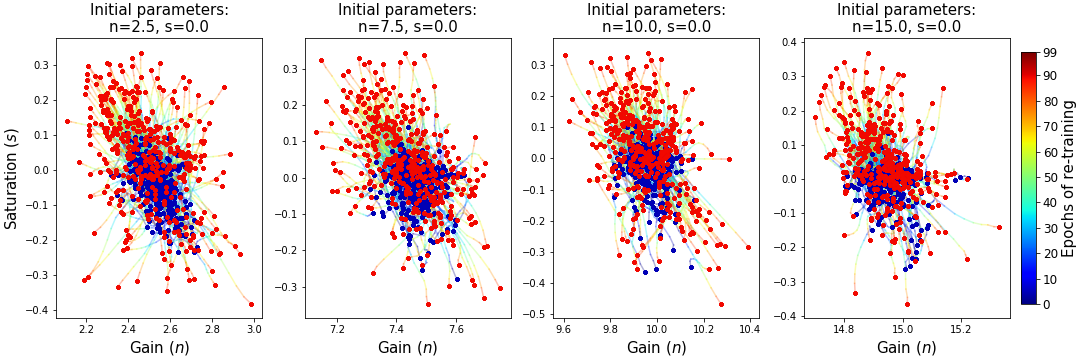}
  \caption{Trajectories of the shape parameters during retraining on the modified MNIST images (Figure \ref{fig:shape_trajectories} of the article) for 4 other network initializations.}
  \label{fig:appendix:shape_trajectories}
  \vspace{-10pt}
\end{figure}

\bibliographyNew{main, main_2}
\bibliographystyleNew{unsrt}

\end{document}